\newcommand{\cvec}{{\bm c}}
\newcommand{\fvec}{{\bm f}}
\newcommand{\lmat}{{\bm L}}
\newcommand{\wmat}{{\bm W}}
\newcommand{\dmat}{{\bm D}}
\newcommand{\rvec}{{\bm r}}
\newcommand{\amat}{{\bm A}}
\newcommand{\gmat}{{\bm G}}
\newcommand{\lzeronorm}[1]{\left|\left|#1\right|\right|_0}
\newcommand{\lonenorm}[1]{\left|\left|#1\right|\right|_1}
\newcommand{\ltwonorm}[1]{\left|\left|#1\right|\right|_2}
\newcommand{\scones}{SConES}
\newcommand{\glasso}{non-overlapping group Lasso}
\newtheorem{proposition}{Proposition}
\newtheorem{proof}{Proof}
\DeclareMathOperator*{\argmin}{arg\,min}
\DeclareMathOperator*{\argmax}{arg\,max}
\title{Efficient network-guided multi-locus association mapping with
graph cuts}
 \author{Chlo\'e-Agathe Azencott\footnote{\texttt{chloe-agathe.azencott@tuebingen.mpg.de}}~\footnote{Machine Learning and Computational Biology Research Group,  Max Planck Institute for Developmental Biology \& 
     Max Planck Institute for Intelligent Systems 
     Spemannstr. 38, 72076 T\"ubingen, Germany}~, 
   Dominik Grimm$^\dagger$,
   Mahito Sugiyama$^\dagger$,
   Yoshinobu Kawahara\footnote{The Institute of Scientific  and Industrial Research (ISIR)
     Osaka University
     8-1 Mihogaoka, Ibaraki-shi, Osaka 567-0047 Japan} \\and
   Karsten M. Borgwardt$^\dagger$\footnote{Zentrum f\"ur Bioinformatik, Eberhard Karls Universit\"at
   T\"ubingen, 72076 T\"ubingen, Germany} 
 }
\begin{document}

 \maketitle

\begin{abstract}
    As an increasing number of genome-wide association studies reveal
    the limitations of attempting to explain phenotypic heritability
    by single genetic loci, there is growing interest for associating
    complex phenotypes with {\it sets of genetic loci}.  While several
    methods for multi-locus mapping have been proposed, it is often
    unclear how to relate the detected loci to the growing knowledge
    about gene pathways and networks.  The few methods that take
    biological pathways or networks into account are either restricted
    to investigating a limited number of predetermined sets of loci,
    or do not scale to genome-wide settings.

    We present \scones, a new efficient method to discover sets of
    genetic loci that are maximally associated with a phenotype, while
    being connected in an underlying network.  Our approach is based
    on a minimum cut reformulation of the problem of selecting
    features under sparsity and connectivity constraints, which can be
    solved exactly and rapidly.

    \scones\ outperforms state-of-the-art competitors in terms of
    runtime, scales to hundreds of thousands of genetic loci, and
    exhibits higher power in detecting causal SNPs in simulation
    studies than existing methods. On flowering time phenotypes and
    genotypes from {\it Arabidopsis thaliana}, \scones\ detects loci
    that enable accurate phenotype prediction and that are supported
    by the literature.

    Matlab code for \scones\ is available at
    \url{http://webdav.tuebingen.mpg.de/u/karsten/Forschung/scones/}.
\end{abstract}

\section{Introduction}
\label{sec:intro}

Twin and family/pedigree studies make it possible to estimate the
heritability of observed traits, that is to say the amount of their
variability that can be attributed to genetic differences.  In the
past few years, genome-wide association studies (GWAS), in which
several hundreds of thousands to millions of single nucleotide
polymorphisms (SNPs) are assayed in up to thousands of individuals,
have made it possible to identify hundreds of genetic variants
associated with complex phenotypes~\citep{zuk12}.  Unfortunately,
while studies associating single SNPs with phenotypic outcomes have
become standard, they often fail to explain much of the heritability
of complex traits~\citep{manolio09}.  Investigating the joint effects
of multiple loci by mapping sets of genetic variants to the phenotype
has the potential to help explain part of this missing
heritability~\citep{marchini05}.  While efficient multiple linear
regression approaches~\citep{cho10,wang11,rakitsch12} make the
detection of such multivariate associations possible, they often
remain limited in power and hard to interpret.  Incorporating
biological knowledge into these approaches could help boosting their
power and interpretability.  However, current methods are limited to
predefining a reasonable number of candidate sets to
investigate~\citep{cantor10,fridley11,wu11}, for instance by relying
on gene pathways.  They consequently run the risk of missing
biologically relevant loci that have not been included in the
candidate sets.  This risk is made even likelier by the incomplete
state of our current biological knowledge.

For this reason, our goal here is to use prior knowledge in a more
flexible way. We propose to use a biological network, defined between
SNPs, to guide a multi-locus mapping approach that is both efficient
to compute and biologically meaningful: {\it We aim to find a set of
  SNPs that (a) are maximally associated with a given phenotype and
  (b) tend to be connected in a given biological network. In addition,
  this set must be computed efficiently on genome-wide data}.  In this
paper we assume an additive model to characterize multi-locus
association.  The network constraint stems from the assumption that
SNPs influencing the same phenotype are biologically linked. However,
the diversity of the type of relationships that this can encompass,
together with the current incompleteness of biological knowledge,
makes providing a network in which all the relevant connections are
present unlikely. For this reason, while we want to encourage the SNPs
to form a subnetwork of the network, we also do not want to enforce
that they \emph{must} form a single connected component.  Finally, we
stress that the method must scale to networks of hundreds of thousands
or millions of nodes. Approaches by~\citet{nacu07},~\citet{chuang07}
or~\citet{li08} developed to analyze gene networks containing hundreds
of nodes do therefore not apply.

While our method can be applied to any network between genetic
markers, we explore three special types of networks (
Figure~\ref{fig:networks}):
\begin{itemize}
        \item \emph{Genomic sequence network} (GS): SNPs adjacent on
    the genomic sequence are linked together. In this setting we aim
    at recovering sub-sequences of the genomic sequence that correlate
    with the phenotype.
        \item \emph{Gene membership network} (GM): SNPs are connected
    as in the sequence network described above; in addition, SNPs near
    the same gene are linked together as well. Usually, a SNP is
    considered to belong to a gene if it is either located inside said
    gene ore within a pre-defined distance of this gene. In this
    setting we aim more particularly at recovering genes that
    correlate with the phenotype.
        \item \emph{Gene interaction network} (GI): SNPs are connected
    as in the gene membership network described above. In addition,
    supposing we have a gene-gene interaction network (derived, for
    example, from protein-protein interaction data or gene expression
    correlations), SNPs belonging to two genes connected in the gene
    network are linked together. In this setting, we aim at recovering
    potential pathways that explain the phenotype.
\end{itemize}

Our task is a feature selection problem in a graph-structured feature
space, where the features are the SNPs and the selection criterion
should be related to their association with the phenotype considered.
Note that our problem is different from subgraph selection problems
such as those encountered in chemoinformatics, where each object is a
graph and each feature is a subgraph of its
own~\citep{tsuda_graph_2011}.

Several approaches have already been developed for selecting
graph-structured features.  A number of them~\citep{lesaux05,jie12}
only use the graph over the features to build the learners evaluating
their relevance, but do not enforce that the selected features should
follow this underlying structure. Indeed they can be applied to
settings where the features connectivity varies across examples, while
here all individuals share the same network.

\begin{figure}[t]
    \vspace{-10pt} \centering \subfloat[Genomic sequence network: SNPs
    adjacent on the genomic sequence are connected to each other.]{
      \includegraphics[width=0.42\linewidth]{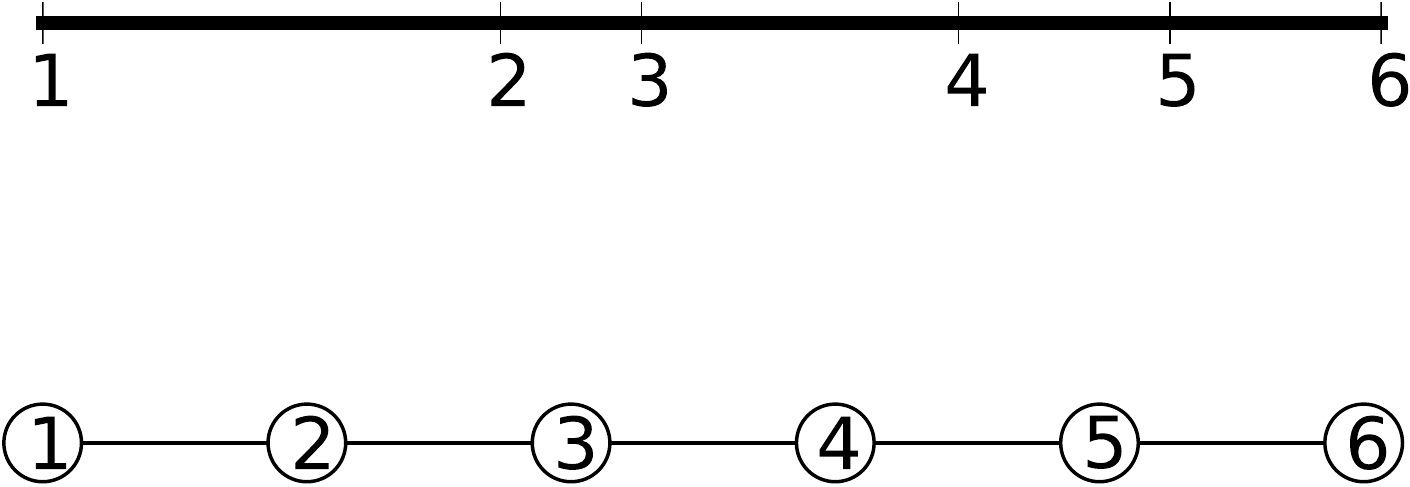}
      \label{fig:networks-basic}} \qquad \subfloat[Gene membership
    network: In addition, SNPs near the same gene (within a specified
    distance) are connected.]{
      \includegraphics[width=0.42\linewidth]{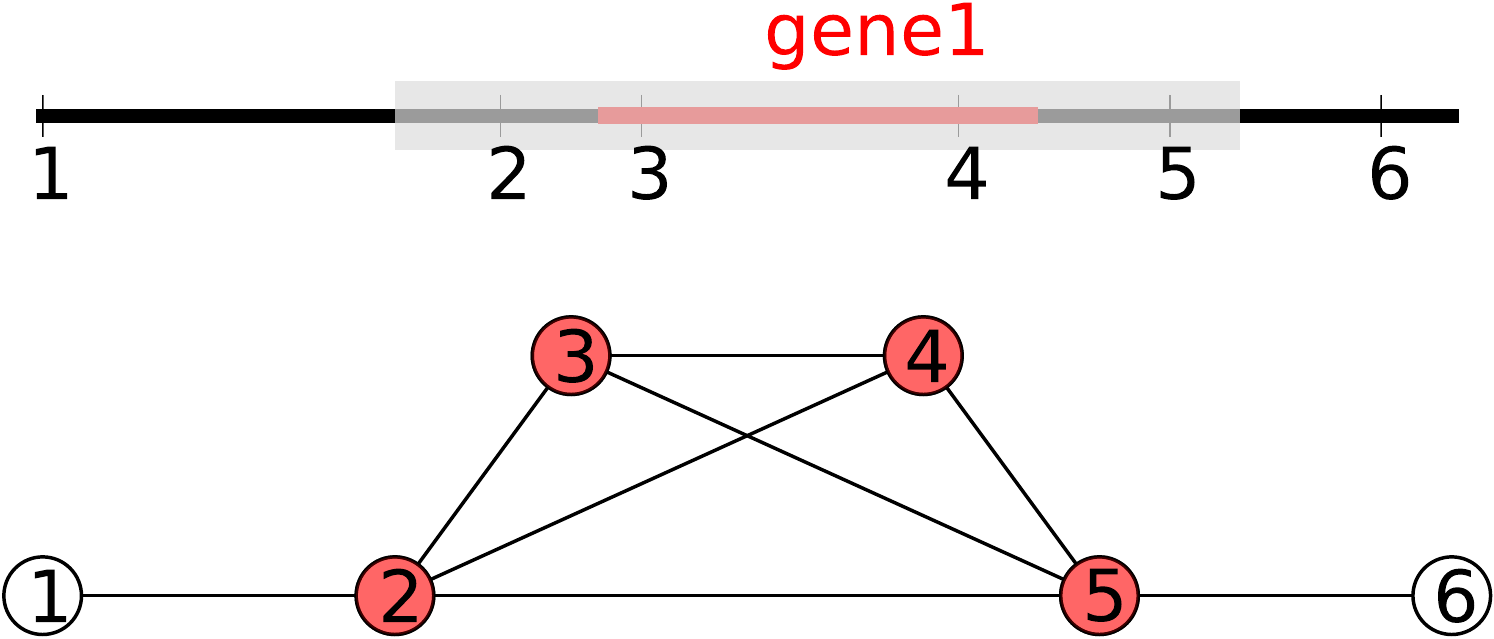}
      \label{fig:networks-gene}
    } \\
    \subfloat[Gene-interaction network: In addition, SNPs near two
    interacting genes are connected.]{
      \includegraphics[width=0.5\linewidth]{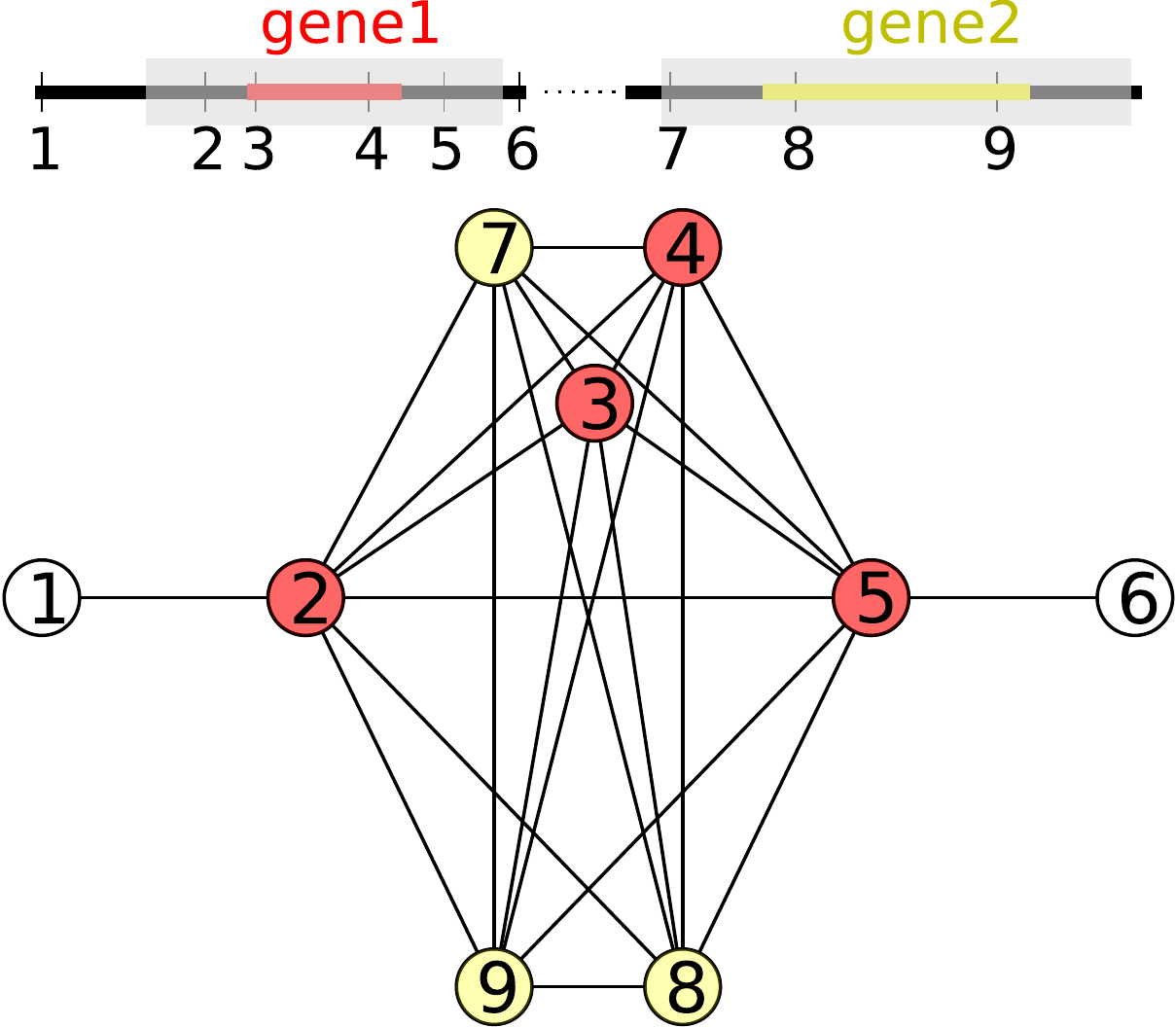}
      \label{fig:networks-interaction}
    }
    \caption{Small examples of the three types of networks
      considered.}
    \label{fig:networks}
\end{figure}

The overlapping group Lasso~\citep{jacob09,liu12} is a sparse linear
model designed to select features that belong to the union of a small
number of predefined groups.  If a graph over the features is given,
defining those groups as all pairs of features connected by an edge or
as all linear subgraphs of a given size yields the so-called graph
Lasso.  A similar approach is taken by~\citet{huang09}: their
structured sparsity penalty encourages selecting a small number of
base blocks, where blocks are sets of features defined so as to match
the structure of the problem. In the case of a graph-induced
structure, blocks are defined as small connected components of that
graph.  As shown in~\citet{mairal11}, the overlapping group Lasso
mentioned above is a relaxation of this binary problem.  As the number
of linear subgraphs or connected components of a given size grows
exponentially with the number of nodes of the graph, which can reach
millions in the case of whole genome SNP data, only the edge-based
version of the graph Lasso can be applied to our problem. It is
however unclear whether it is sufficient to capture long-range
connections between graph nodes.

\citet{li08} propose a network-constrained version of the Lasso that
imposes the type of graph connectivity we deem desirable.  However,
their approach has been developed with networks of genes (rather than
of SNPs) in mind and does not scale easily to the data sets we
envision.  Indeed, the implementation they propose relies on a
singular value decomposition of the Laplacian of the network, which is
intensive to compute and cannot be stored in memory.

\citet{chuang07} also searched subnetworks of protein-protein
interaction networks that are maximally associated with a phenotype;
however, their greedy approach requires to fix beforehand a
(necessarily small) upper-limit on the size of the subnetworks
considered.

In the case of directed acyclic graphs,~\citet{mairal11} propose a
minimum flow formulation that make it possible to use for groups (or
blocks) the set of all paths of the network. Unfortunately, the
generalization to undirected graphs with cycles, such as the SNP
networks we consider, requires to randomly assign directions to edges
and prune those in cycles without any biological justification.
Although this can work reasonably well in practice~\citep{mairal11},
this is akin to artificially removing more than half of the network
connections without any biological justification.

In what follows, we formulate the network-guided SNP selection problem
as a minimum cut problem on a graph derived from the SNP network in
Section~\ref{sec:methods} and evaluate the performance of our solution
both in simulations and on actual \emph{Arabidopsis thaliana} data in
Section~\ref{sec:results}.

\section{Methods}
\label{sec:methods}

\subsection{Problem Formulation}
\label{sec:problem-formulation}

Let $n$ be the number of SNPs and $m$ the number of individuals. The
SNP-SNP network is described by its adjacency matrix $\wmat$ of size
$n \times n$.  A number of statistics based on covariance matrices,
such as HSIC~\citep{gretton05} or SKAT~\citep{wu11}, can be used to
compute a measure of dependence $\cvec\in\mathbb{R}^n$ between each
single SNP and the phenotype.  Under the common assumption that the
joint effect of several SNPs is additive (which corresponds to using
linear kernels in those methods), $\cvec$ is such that the association
between a group of SNPs and the phenotype can be quantified as the sum
of the scores of the SNPs belonging to this group. That is, given an
indicator vector $\fvec \in \{0, 1\}^n$ such that, for any $p \in \{1,
\cdots, n\}$, $f_p$ is set to $1$ if the $p$-th SNP is selected and
$0$ otherwise, the score of the selected SNPs is given by $Q(\fvec) =
\sum_{p=1}^n c_p f_p = \cvec ^\top \fvec$.

We want to find the indicator vector $\fvec$ that maximizes $Q(\fvec)$
while ensuring that the solution is made of connected components of
the SNP network.  However, in general, it is difficult to find a
subset of SNPs that satisfies the above two properties. In fact, given
a positive integer $k$, the problem of finding a connected subgraph
with $k$ vertices that maximize the sum of the weights on the
vertices, which is equivalent to $Q(\fvec)$ of our case, is known to
be a strongly {\bf NP}-complete problem~\citep{lee96}. Therefore, this
problem is often addressed based on enumeration-based algorithms,
whose runtime grows exponentially with $k$. To cope with this problem,
we consider an approach based on a graph-regularization scheme, which
allows us to drastically reduce the runtime.

\subsection{Feature Selection with Graph Regularization}
\label{sec:maximum-weight-score}
Rather than searching through all subgraphs of a given network, we
reward the selection of adjacent features through graph
regularization. As it is also desirable for biological interpretation
and to avoid selecting large number of SNPs in linkage disequilibrium,
that the selected sub-networks are small in size, we reward sparse
solutions. The first requirement can be addressed by means of a
smoothness regularizer on the network~\citep{smola03,ando07}, while
the second one can be enforced with an $l_0$ constraint:
\begin{equation}
    \argmax_{\fvec \in \{0, 1\}^n}~ \underbrace{\cvec ^\top
      \fvec}_\textrm{association} - 
    \underbrace{\lambda~ \fvec ^\top \lmat
      \fvec}_\textrm{connectivity} - 
    \underbrace{\eta~ \lzeronorm{\fvec}}_\textrm{sparsity}
    \label{eq:pb1}
\end{equation}
where $\lmat$ is the Laplacian of the SNP network. $\lmat$ is defined
as $\lmat = \dmat - \wmat$, where $\dmat$ is the diagonal matrix where
$\dmat_{p,p}$ is the degree of node $p$.  Note that here, we directly
minimize the number of non-zero entries in $f$ and do not require the
proxy of an $l_1$ constraint to achieve sparsity (of course in the
case of binary indicators, $l_1$ and $l_0$ norms are equivalent).
Positive parameters $\lambda$ and $\eta$ control the importance of the
connectedness of selected features and the sparsity regularizer,
respectively.

Since $W_{p,q} = 1$ if $q$ is a neighbor of $p$ (also written as $p
\sim q$), and $0$ otherwise, if we denote by $\mathcal{N}(p)$ the
neighborhood of $p$, then the degree of $p$ can be rewritten $D_{p,p}
= \sum_{q \in \mathcal{N}(p)} 1$.  The second term in
Eq.~\eqref{eq:pb1} can therefore be rewritten as
\begin{equation}
    \fvec ^\top \lmat \fvec = \sum_{p \sim q}  (f_p - f_q)^2,
    \label{eq:term2}
\end{equation}
and the problem in Eq.~\eqref{eq:pb1} is equivalent to
\begin{equation}
    \argmax_{\fvec \in \{0, 1\}^n}~ \sum_{p=1}^{n} f_p 
    (c_p - \eta) -
    \lambda \sum_{p \sim q}  (f_p - f_q)^2 \;  .
    \label{eq:pb2}
\end{equation}
As $(f_p -f_q)^2$ is $1$ if $f_p \neq f_q$ and $0$ otherwise, it can
be seen that the connectivity term in Eq.~\eqref{eq:pb1} penalizes the
selection of SNPs not connected to one another, as well as the
selection of only subnetworks of connected components of the SNP
network.  Note that it does not prohibit the selection of several
disconnected subnetworks. In particular, solutions may include
individual SNPs fully disconnected from the other selected SNPs.
Also, as $\lzeronorm{\fvec} = \mathds{1}_n ^\top \fvec$ in our case,
the sparsity term in Eq.~\eqref{eq:pb1} is equivalent to reducing the
individual SNP scores $\cvec$ by a constant $\eta > 0$.

\subsection{Min-Cut Solution}
\label{sec:min-cut-solution}

\begin{figure}[t]
    \centering
    \includegraphics[width=.8\textwidth]{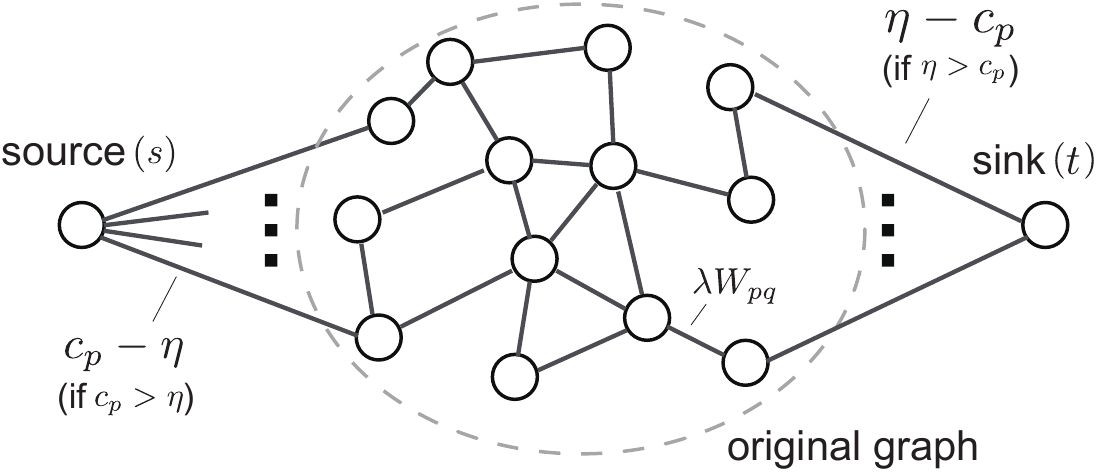}
    \caption{Graph for the $s$/$t$-min-cut formulation of the
      selection of networks of genetic markers.}
    \label{fig:stcut}
\end{figure}

A {\em cut} on a weighted graph over vertices $V := \{1, \dots, n\}$
is a partition of $V$ in a non-empty set $S$ and its complementary $V
\setminus S$. The {\em cut-set} of the cut is the set of edges whose
end vertices belong to different sets of the partition.  The {\em
  minimum cut} of the graph is the cut such that the sum of the
weights of the edges belonging to its cut-set is minimum.  If $\amat$
is the adjacency matrix of the graph, finding the minimum cut is
equivalent to finding $S \subset V$ that minimizes the {\em
  cut-function} $\sum_{p \in S} \sum_{q \notin S} \amat_{p,q} =
\sum_{p=1}^n\sum_{q=1}^n \fvec_p (1-\fvec_q) \amat_{p,q}$ where
$\fvec_p$ is $1$ if $p \in S$ and $0$ otherwise.  Given two vertices
$s$ and $t$, an {\em $s$/$t$-cut} is a cut such that $s \in S$ and $t
\in V \setminus S$.  According to the max-flow min-cut
theorem~\citep{papadimitriou82}, a minimum $s$/$t$-cut can be
efficiently computed with the maximum flow
algorithm~\citep{goldberg88}.

\begin{proposition}
    \label{prop}
    Given a graph $\mathcal{G}$ of adjacency matrix $\wmat$, solving
    the graph-regularized feature selection problem formalized in
    Eq.~\eqref{eq:pb1} is equivalent to finding an $s$/$t$ min-cut on
    the graph, depicted in Figure~\ref{fig:stcut}, whose vertices are
    that of $\mathcal{G}$, augmented by two additional nodes $s$ and
    $t$, and whose edges are given by the adjacency matrix $\amat$,
    where $\amat_{p,q}= \lambda \wmat_{p,q}$ for $1 \le p,q \le n$ and
    \setlength{\arraycolsep}{0pt}
    \begin{equation*}
        \begin{split}
            \amat_{s,p} = \left\{ \begin{array}{cl}
                    c_p - \eta & \mbox{ if } c_p > \eta \\
                    0 & \mbox{ otherwise} \\
                \end{array} \right.
            \text{ and }
            \amat_{t,p} = \left\{ \begin{array}{cl}
                    \eta - c_p & \mbox{ if } c_p < \eta \\
                    0 & \mbox{ otherwise}  \;  \\
                \end{array} \right.\\
            \hfill (p=1,\ldots,n).
        \end{split}
    \end{equation*}
\end{proposition}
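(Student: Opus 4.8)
The plan is to exhibit an explicit bijection between indicator vectors $\fvec \in \{0,1\}^n$ and $s$/$t$-cuts of the augmented graph, and to show that under this bijection the weight of the cut-set differs from the negated objective of Eq.~\eqref{eq:pb1} only by an additive constant. Since Eq.~\eqref{eq:pb1} was already shown to be equivalent to Eq.~\eqref{eq:pb2}, it suffices to compare cut values with the objective of Eq.~\eqref{eq:pb2}.

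First I would set up the correspondence. An $s$/$t$-cut of the augmented graph is determined by the side $S \ni s$ to which each original vertex is assigned; setting $f_p = 1$ iff $p \in S$ gives a bijection between $s$/$t$-cuts and vectors $\fvec \in \{0,1\}^n$, with the constraints $s \in S$ and $t \notin S$ automatically satisfied (so no nonemptiness issue arises, since $S \supseteq \{s\}$ and $V\setminus S \supseteq \{t\}$). I would then split the cut-set into three groups and sum their weights: (i) edges between original vertices $p \sim q$, which are cut exactly when $f_p \neq f_q$, contributing $\lambda \sum_{p \sim q}(f_p - f_q)^2$; (ii) edges $(s,p)$, which are cut iff $p \notin S$, contributing $\sum_{p}(1-f_p)\amat_{s,p} = \sum_{p : c_p > \eta}(1 - f_p)(c_p - \eta)$; and (iii) edges $(p,t)$, which are cut iff $p \in S$, contributing $\sum_p f_p \amat_{t,p} = \sum_{p : c_p < \eta} f_p(\eta - c_p)$.

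Next I would simplify, using that $\amat_{s,p} = \max(c_p - \eta,0)$ and $\amat_{t,p} = \max(\eta - c_p,0)$ satisfy $\amat_{s,p} - \amat_{t,p} = c_p - \eta$ for every $p$, while $C := \sum_p \amat_{s,p} = \sum_{p : c_p > \eta}(c_p - \eta)$ is a constant independent of $\fvec$. A short rearrangement of the three terms then shows that the total weight of the cut-set equals
\begin{equation*}
    C - \left( \sum_{p=1}^n f_p (c_p - \eta) - \lambda \sum_{p \sim q}(f_p - f_q)^2 \right),
\end{equation*}
that is, the constant $C$ minus the objective of Eq.~\eqref{eq:pb2}. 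Hence a cut is minimum iff the associated $\fvec$ maximizes Eq.~\eqref{eq:pb2}, and therefore iff it solves Eq.~\eqref{eq:pb1}.

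I expect the only delicate point to be the bookkeeping in the last step: checking that the $\fvec$-dependent parts of the source and sink contributions recombine into $\sum_p f_p(c_p - \eta)$ regardless of the sign of $c_p - \eta$ — which is exactly what the $\max(\cdot,0)$ splitting of the source/sink capacities is engineered to achieve. It is also worth noting explicitly that every weight in $\amat$ is nonnegative ($\lambda > 0$, $\wmat$ is a $0/1$ matrix, and the source/sink capacities have the form $\max(\cdot,0)$), which is what legitimizes invoking the max-flow min-cut theorem and the polynomial-time maximum-flow algorithm cited above.
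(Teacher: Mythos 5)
Your proof is correct and follows essentially the same route as the paper's: the same augmented graph, the same three-way decomposition of the cut-set into internal, source, and sink edges, and the same constant $C=\sum_{p:c_p>\eta}(c_p-\eta)$ (the negative of the paper's $\mathcal{C}$); you merely run the computation from cut value to objective rather than from objective to cut value. Your explicit remark that all capacities are nonnegative, which justifies invoking max-flow/min-cut, is a point the paper leaves implicit.
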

\begin{proof}
    The problem in Eq.~\eqref{eq:pb1} is equivalent to
    \begin{equation}
        \argmin_{\fvec \in \{0, 1\}^n} ~(\eta \mathds{1}_n - \cvec) ^\top \fvec + \lambda \fvec^\top\lmat\fvec \,  .
        \label{eq:pb3}
    \end{equation}
    The second term of the objective is a cut-function over
    $\mathcal{G}$:
    \begin{equation*}
        \fvec^\top\lmat\fvec =
        \sum_{p=1}^nf_p\left(D_{p,p}-\sum_{q=1}^nW_{p,q}f_q\right) =
        \sum_{p=1}^n
        \sum_{q=1}^n W_{p,q}f_p(1-f_q).
    \end{equation*}  
    The first term can also be encoded as a cut-function by
    introducing to artificial nodes $s$ and $t$:
    \begin{equation*}
        \arraycolsep=1pt 
        \begin{array}{rcl}
            \displaystyle \sum_{p=1}^n (\eta - c_p) \fvec_p  &  =  & 
            \displaystyle \sum_{\substack{p \in S \\ c_p < \eta}} (\eta - c_p) + \displaystyle \sum_{\substack{p \in V \\ c_p \geq \eta}} (\eta - c_p) -  \displaystyle \sum_{\substack{p \notin S \\ c_p \geq \eta}} (\eta - c_p) \\
            & = &  \displaystyle \sum_{p=1}^n \amat_{s,p} \fvec_s (1-\fvec_p) +   \displaystyle \sum_{p=1}^n \amat_{p,t} \fvec_p (1-\fvec_t) + \mathcal{C}\\
        \end{array}
    \end{equation*}
    where $\mathcal{C} = \sum_{p \in V; c_p \geq \eta} (\eta - c_p)$
    is a constant, $\fvec_s = 1$, $\fvec_t = 0$, and $\amat$ is
    defined as above.  As $\fvec_s = 1$ and $\fvec_t=0$ enforce that
    $s \in S$ and $t \notin S$, it follows that Eq.~\eqref{eq:pb1} is
    an $s$/$t$ min-cut problem on the transformed graph defined by the
    adjacency matrix $\amat$ over the vertices of $\mathcal{G}$
    augmented by $s$ and $t$.  Note that the above still holds if
    $\wmat$ is a weighted adjacency matrix, and therefore the min-cut
    reformulation can also be applied to a weighted
    network. $\hfill\blacksquare$
\end{proof}

It is therefore possible to use maximal flow algorithms to efficiently
optimize the objective function defined in Equation~\eqref{eq:pb1} and
select a small number of connected SNPs maximally associated with a
phenotype.  In our implementation, we use the Boykov-Kolmogorov
algorithm~\citep{boykov04}.  Although its worst case complexity is in
$\mathcal{O}(n^2n_E n_C)$, where \text{$n_E$} is the number of edges
of the graph and \text{$n_C$} the size of the minimum cut, it performs
much better in practice, particularly when the graph is sparse.  We
refer to this method as \scones, for Selecting CONnected Explanatory
SNPs.

\section{Results}
\label{sec:results}
We evaluate the ability of \scones\ to detect networks of
trait-associated SNPs on simulated datasets and on datasets from an
association mapping study in {\it Arabidopsis thaliana}.

\subsection{Experimental Settings}
\label{sec:exp-settings}
For all of our experiments, we consider the three SNP networks defined
in Section~\ref{sec:intro}: the genomic sequence network (GS), the
gene membership network (GM), and the gene interaction network (GI).
For \scones, the association term $\cvec$ is derived from Linear
SKAT~\citep{wu11}, which makes it possible to correct for covariates
(and therefore population structure). SKAT has been devised to address
rare variants associations problems by grouping SNPs to achieve
statistical significance, but can equally be applied to common
variants.

\paragraph{Univariate linear regression} As a baseline for
comparisons, we run a linear-regression-based single-SNP search for
association, and select those SNPs that are significantly associated
with the phenotype (Bonferroni-corrected $p$-value $\leq 0.05$).

\paragraph{LMM} Similarly, we run a linear mixed model (LMM)
single-SNP search for association~\citep{lippert11}, and select those
SNPs that are significantly associated with the phenotype
(Bonferroni-corrected $p$-value $\leq 0.05$).

\paragraph{Lasso} To compare \scones\ to a method that also considers
all additive effects of SNPs simultaneously with a sparsity
constraint, but without any network regularization, we also run a
Lasso regression~\citep{tibshirani94}, using the SLEP implementation~\citep{slep}
of the Lasso.

\paragraph{ncLasso} In addition, we compare \scones\ to the
network-constrained Lasso ncLasso~\citep{li08}, a version of the Lasso
with sparsity and graph-smoothing constraints equivalent to that of
\scones.  Given a genotype matrix $\gmat$ and a phenotype $\rvec$,
ncLasso solves the following relaxed problem ($\fvec \in
\mathbb{R}^n$):
\begin{equation}
    \argmin_{\fvec \in \mathbb{R}^n} ~\frac{1}{2} \ltwonorm{\gmat
      \fvec - \rvec}^2 + \lambda
    \fvec  ^\top \lmat \fvec + \eta \lonenorm{\fvec}
    \label{eq:nc-lasso}
\end{equation}

The solution for ncLasso proposed by~\citet{li08} requires to compute
and store a single value decomposition of $\lmat$ and is therefore not
applicable when its sizes exceeds $100\,000 \times 100\,000$ by far.
However, a similar solution can be obtained by decomposing $\lmat$ as
the product of the network's incidence matrix with its transpose, an
approach that is much faster (particularly when the network is
sparse).

\paragraph{groupLasso and graphLasso} Eventually, we also compare our
method to the \glasso\ \citep{jacob09}.  The \glasso\ solves the
following relaxed problem:
\begin{equation}
    \argmin_{\fvec \in \mathbb{R}^n} ~\frac{1}{2} \ltwonorm{\gmat
      \fvec - \rvec}^2 + \lambda
    \sum_{g \in \mathcal{G}} \ltwonorm{\fvec^{\mathcal{G}}}
    \label{eq:group-lasso}
\end{equation}
where $\mathcal{G}$ is a set of (possibly overlapping) predefined
groups of SNPs.  We consider the following two
versions:
\begin{itemize}
        \item graphLasso, for which the groups are directly defined
    from the same networks as considered for \scones\ as all pairs of
    vertices connected by an edge;
        \item groupLasso, for which the groups are defined sensibly as
    follows:
    \begin{itemize}
            \item \emph{Genomic sequence groups} (GS): pairs of
        adjacent SNPs (note this gives raise to the same groups as for
        graphLasso with the sequence network);
            \item \emph{Gene membership groups} (GM): SNPs near the
        same gene;
            \item \emph{Gene interaction groups} (GI): SNPs near
        either member of two interacting genes. Here SNPs near genes
        that are not in the interaction network get grouped by gene.
    \end{itemize}
\end{itemize}

We use the SLEP implementation of the non-overlapping group Lasso~\citep{slep},
combined with the trick described by~\citet{jacob09} to compute the
overlapping group Lasso by replicating features in non-overlapping
groups.

\paragraph{Setting the parameters} All methods considered, except for
the univariate linear regression, have parameters (e.g. $\lambda$ and
$\eta$ in the case of \scones) that need to be optimized.  In our
experiments, we run $10$-fold cross-validation grid-search experiments
over ranges of values of the parameters: $7$ values of $\lambda$ and
$\eta$ each for \scones\ and ncLasso, and $7$ values of the parameter
$\lambda$ for the Lasso and the \glasso\ (ranging from $10^{-3}$ to
$10^3$).  We then pick as optimal the parameters leading to the most
stable selection, and report as finally selected the features selected
in all folds.  More specifically, we define stability according to a
consistency index similar to that of~\citet{kuncheva07}.
Following~\cite{kuncheva07}, the consistency index between two feature
sets $S$ and $S'$ is defined relative to the size of their overlap:
$$I_C(S, S') := \frac{\mbox{Observed} (|S \cap S'|) - \mbox{Expected}
  (|S \cap S'|)} 
{\mbox{Maximum} (|S \cap S'|) - \mbox{Expected} (|S \cap S'|)}$$
where
$$\mbox{Maximum} (|S \cap S'|) = min(|S|, |S'|)$$
and $\mbox{Observed} (|S \cap S'|)$ is derived from the hypergeometric
distribution as the expected probability of picking $|S'|$ features
out of $n$ such that $|S \cap S'|$ are among the $|S|$ features in
$S$:

$$ P (|S \cap S'| = r) = \frac{{|S| \choose r} {n-|S| \choose |S'|-r} }{ {n \choose |S'|}}$$

and $$\mbox{Expected}(|S \cap S'|) = \mathbb{E}(P(|S \cap S'| = r)) =
\frac{|S||S'|}{n}.$$

Finally
$$I_C(S, S') = \frac{ n|S \cap S'| - |S||S'| }{ n\min(|S|, |S'|) - |S||S'|}.$$

For an experiment with $k$ folds, the consistency is computed as the
average of the $k(k-1)/2$ pairwise consistencies between sets of
selected features:
$$I_C(S_1, S_2, \dots, S_k) = \frac{k(k-1)}{2} \sum_{i=1}^k \sum_{j=i+1}^k I_C(S_i, S_j).$$

\subsection{Runtime}
We first compare the CPU runtime of \scones\ with that of the linear
regression, ncLasso and graphLasso.  To assess the performance of our
methods, we simulate from $100$ to $200\,000$ SNPs for $200$
individuals and generate exponential random networks with a density of
$2\%$ (chosen as an upper limit on the density of currently available
gene-gene interaction networks) between those SNPs.

We report the real CPU runtime of one cross-validation, for set
parameters, over a single AMD Opteron CPU ($2048$KB, $2600$MHz) with
$512$GB of memory, running Ubuntu 12.04 (Figure
\ref{fig:runtime}). Across a wide range of numbers of SNPs, \scones\
is at least two orders of magnitude faster than graphLasso and one
order of magnitude faster than ncLasso.
\begin{figure*}[t]
    \centering
    \rotatebox{90}{
      \begin{minipage}{\linewidth}
          \includegraphics[width=\linewidth]{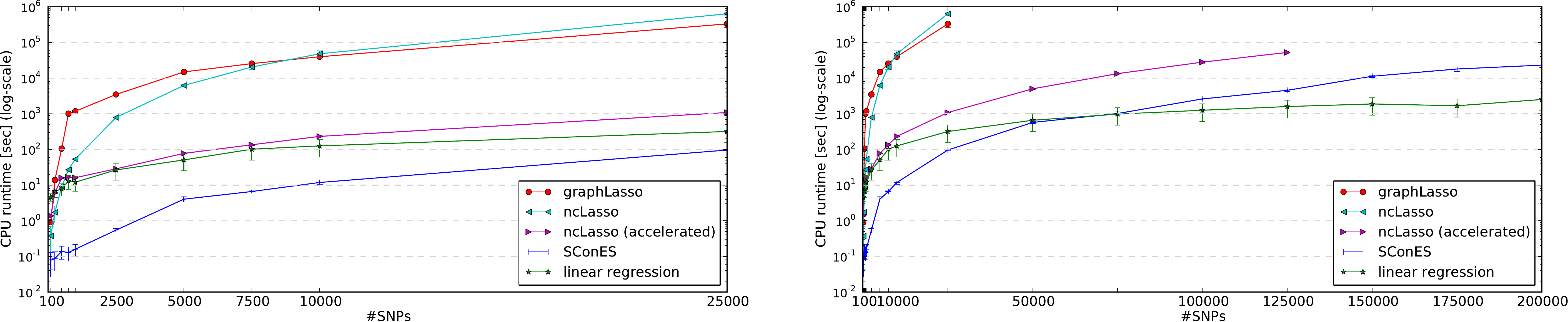}
          \caption{Real CPU runtime comparison between univariate linear
            regression, ncLasso, \glasso\ and \scones, from $100$ to
            $25\,000$ SNPs (left) and from $100$ to $200\,000$ SNPs
            (right). ``ncLasso'' refers to the original implementation
            suggested by~\citet{li08}, and ``ncLasso (accelerated)'' to the
            incidence-matrix based implementation we use here. After three
            weeks, \glasso\ and ncLasso had not finished running for
            $50\,000$ SNPs. The accelerated version of ncLasso ran out of
            memory for $150\,000$ SNPs or more.}
      \end{minipage}
    }
    \label{fig:runtime}
\end{figure*}

\subsection{Simulations}
To assess the performance of our methods, we simulate phenotypes for
$m=500$ real \emph{Arabidopsis thaliana} genotypes ($214\,051$ SNPs),
chosen at random among those made available by~\citet{horton12}, and
the \emph{A. thaliana} protein-protein interaction information from
TAIR~\citep{tpi} (resulting in $55\,584\,646$ SNP-SNP connections). We use a window size
of $20\,000$ base-pairs to define proximity of a SNP to a gene, in
accordance with the threshold used for the interpretation of GWAS
results in~\citet{atwell10}. Restricting ourselves to $1,000$ randomly
picked SNPs with minor allele frequency larger than $10\%$, we pick
$20$ of the SNPs to be causal, and generate phenotypes $y_i = w ^\top
g_i + \epsilon$, where both the support weights $w$ and the noise
$\epsilon$ are normally distributed.  We consider the following
scenarios: (a) the causal SNPs are randomly distributed in the
network; (b) the causal SNPs are adjacent on the genomic sequence; (c)
the causal SNPs are near the same gene; (d-f) the causal SNPs are near
either of two, three, and five interacting genes, respectively.  We
then select SNPs using univariate linear regression, Lasso, ncLasso,
the two flavors of \glasso, and \scones\ as described in
Section~\ref{sec:exp-settings}.  We repeat each experiment $30$ times,
and compare the selected SNPs of either approach with the true causal
ones in terms of power (fraction of causal SNPs selected) and false
discovery rate (FDR, fraction of selected SNPs that are not causal).
We summarize the results with F-scores (harmonic mean of power and one
minus FDR) in Table~\ref{tab:fscore-simu}.

As \scones\ returns a binary feature selection rather than a feature
ranking, it is not possible to draw FDR curves or compare powers at
same FDR as is often done when evaluating such methods.
Figure~\ref{fig:precrec-simu} presents the average FDR and power of
the different algorithms under three of the scenarios, depending on
the network used.  The closer the FDR-power point representing an
algorithm to the upper-left corner, the better this algorithm at
maximizing power while minimizing FDR.  As it is easy to get better
power by selecting more SNPs, we also report on the same figure the
number of SNPs selected by each algorithm, and show that it remains
reasonably close to the true value of $20$ causal SNPs.

\scones\ is systematically better than its state-of-the-art comparison
partners at leveraging structural information to retrieve the
connected SNPs that were causal.  Only when the groups perfectly match
the causal structure (Scenario (d)) can groupLasso outperform SConES.
While the performance of \scones\ and ncLasso does depend on the
network, the \glasso\ is much more sensitive to the definition of its
groups.  Furthermore, we observe that removing a small fraction ($1\%$
to $15\%$) of the edges between causal features does not harm the
performance of \scones\ (see Table~\ref{tab:remove-edges}). This means
that \scones\ is robust to missing edges, an important point when the
biological network used is likely to be incomplete.  Nevertheless, the
performance of \scones, as that of all other network-regularized
approaches, is strongly negatively affected when the network is
entirely inappropriate (Scenario (a)).  In addition, the decrease in
performance from Scenario (c) to Scenario (f), when the number of
interacting genes near which the causal SNPs are located increases
from $1$ to $5$, indicates that \scones, like its
structure-regularized comparison partners, performs better when the
causal SNPs are less spread out in the network.  Finally, ncLasso is
both slower and less performant than \scones. This indicates that
solving the feature selection problem we pose directly, rather than
its relaxed version, allows for better recovery of true causal
features.

\begin{figure}[t]
    \centerline{\subfloat[Scenario (b): The true causal SNPs belong to
      the same genomic segment] {
        \includegraphics[width=\linewidth]{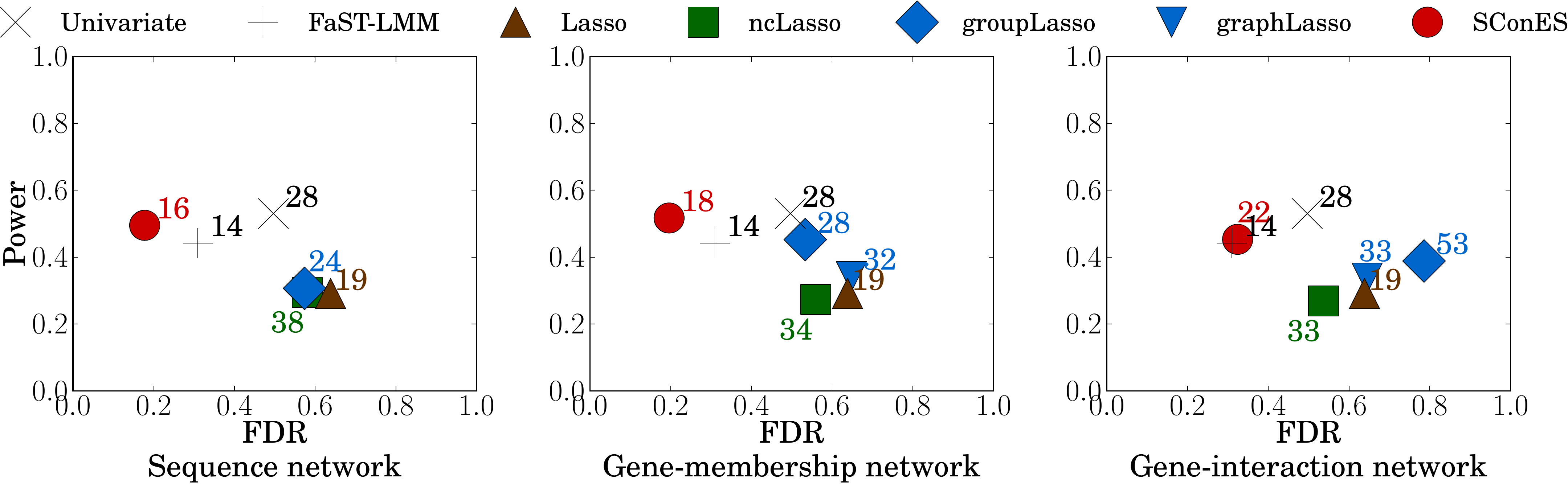}
        \label{fig:precrec-simu-line}
      }} \vspace{-10pt} \centerline{\subfloat[Scenario (c): The true
      causal SNPs are near the same gene] {
        \includegraphics[width=\linewidth]{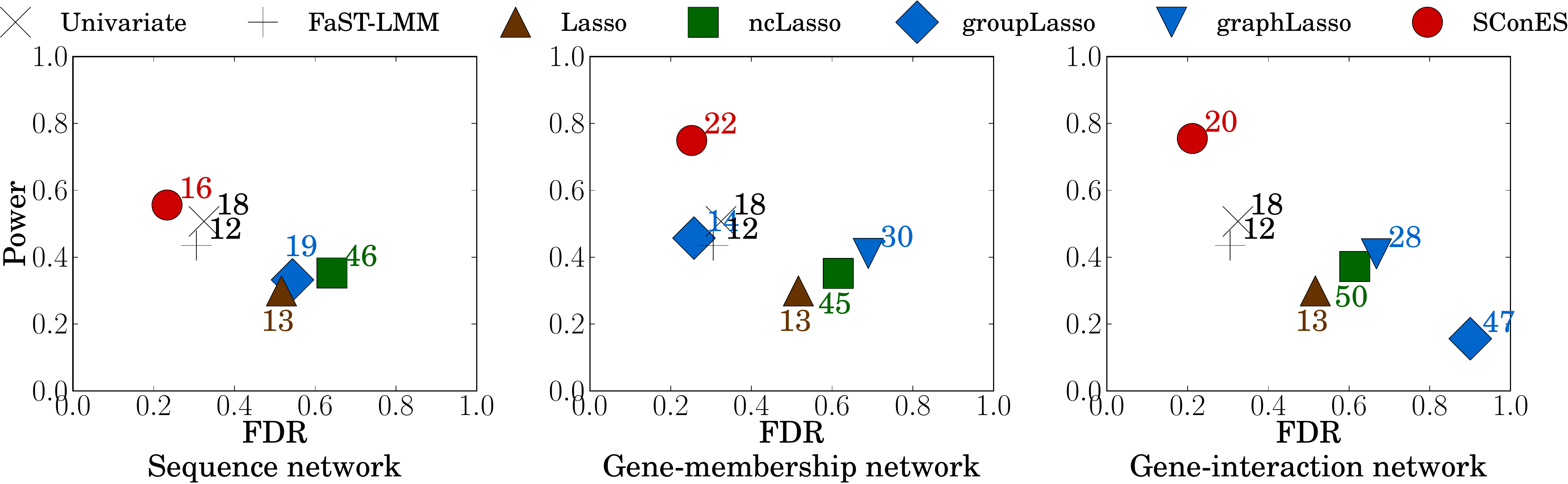}
        \label{fig:precrec-simu-gene}
      }} \vspace{-10pt} \centerline{\subfloat[Scenario (f): The true
      causal SNPs are near any of five interacting genes] {
        \includegraphics[width=\linewidth]{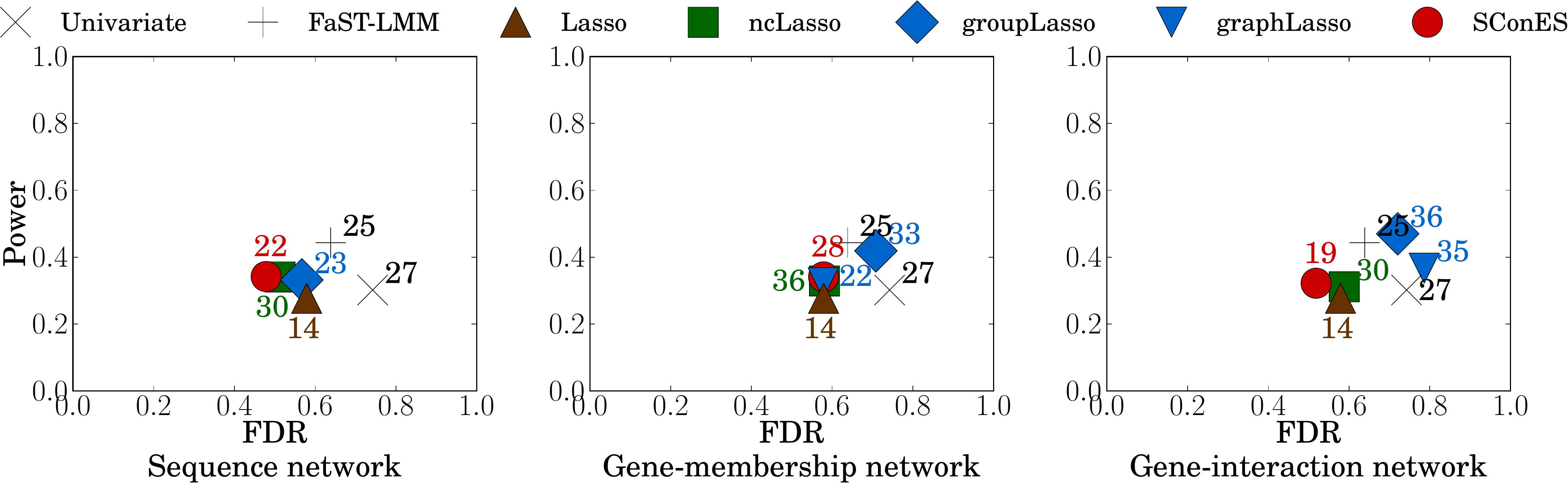}
        \label{fig:precrec-simu-pathway}
      }}
    \caption{Power and false discovery rate (FDR) of SConES, compared
      to state-of-the-art Lasso algorithms and a baseline univariate
      linear regression, in three different data simulation
      scenarios. Best methods are closest to the upper-left
      corner. Numbers denote the number of SNPs selected by the
      method.}
    \label{fig:precrec-simu}
    \vspace{-15pt}
\end{figure}

\renewcommand{\tabcolsep}{1pt}
\begin{table}[t]
    \centering
    \begin{tabular}[h]{|ll|c|c|c|} \hline
        & & (a) & (b) & (c) \\ \hline 
        \multicolumn{2}{|l|}{Univariate} & $0.26 \pm 0.07$  &
        $0.29 \pm 0.12$  & $0.28 \pm 0.14$ \\ \hline
        \multicolumn{2}{|l|}{LMM} & ${\it 0.32 \pm
          0.01}$  & $0.35 \pm 0.01$  & $0.33 \pm
        0.01$ \\ \hline
        \multicolumn{2}{|l|}{Lasso} & $\bm{0.35 \pm 0.01}$  & $0.32 \pm 0.02$  & $0.36 \pm 0.01$  \\ \hline 
        \multirow{3}{*}{ncLasso} & GS & $0.17 \pm 0.01$  & $0.25 \pm 0.02$  & $0.25 \pm 0.01$  \\ 
        & GM & $0.17 \pm 0.01$  & $0.26 \pm 0.02$  & $0.26 \pm 0.02$ \\ 
        & GI & $0.19 \pm 0.01$  & $0.26 \pm 0.02$  & $0.26 \pm 0.02$ \\ \hline 
        \multirow{3}{*}{groupLasso} & GS & $0.23 \pm 0.01$  & $0.30 \pm 0.01$  & $0.34 \pm 0.01$ \\ 
        & GM & $0.12 \pm 0.00$  & $0.44 \pm 0.02$  & $0.55 \pm 0.01$ \\ 
        & GI & $0.09 \pm 0.00$  & $0.26 \pm 0.02$  & $0.11 \pm 0.01$ \\ \hline 
        \multirow{3}{*}{graphLasso} & GS & $0.23 \pm 0.01$ & $0.30 \pm 0.01$  & $0.34 \pm 0.01$ \\ 
        & GM & $0.23 \pm 0.01$ & $0.28 \pm 0.01$  & $0.33 \pm 0.01$ \\ 
        & GI & $0.22 \pm 0.01$ & $0.28 \pm 0.01$  & $0.34 \pm 0.01$ \\ \hline 
        \multirow{3}{*}{SConES} & GS & $0.21 \pm 0.01$  & ${\it 0.55 \pm 0.04}$  & $0.57 \pm 0.04$ \\ 
        & GM & $0.19 \pm 0.02$  & $\bm{0.58 \pm 0.03}$  & ${\it 0.75 \pm 0.03}$ \\ 
        & GI & $0.20 \pm 0.02$  & $0.48 \pm 0.03$  & $\bm{0.78 \pm 0.03}$ \\ \hline
        & & (d) & (e) & (f) \\ \hline 
        \multicolumn{2}{|l|}{Univariate} & $0.27 \pm 0.07$  & $0.26
        \pm 0.07$  & $0.23 \pm 0.08$ \\ \hline
        \multicolumn{2}{|l|}{LMM}  & $0.36 \pm 0.02$  & $0.38 \pm 0.01$  & ${\it 0.33 \pm 0.01}$ \\ \hline          
        \multicolumn{2}{|l|}{Lasso} & $0.36 \pm 0.01$  & $0.37 \pm 0.01$  & $0.32 \pm 0.01$ \\ \hline 
        \multirow{3}{*}{ncLasso} & GS & $0.45 \pm 0.01$  & $0.38 \pm 0.02$  & $0.30 \pm 0.01$ \\ 
        & GM & $0.38 \pm 0.01$  & $0.29 \pm 0.01$  & $0.27 \pm 0.01$ \\ 
        & GI & $0.43 \pm 0.02$  & $0.34 \pm 0.02$  & $0.28 \pm 0.01$ \\ \hline 
        \multirow{3}{*}{groupLasso} & GS & $0.37 \pm 0.01$  & $0.36 \pm 0.02$  & $0.32 \pm 0.01$ \\ 
        & GM & ${\it 0.50 \pm 0.01}$  & ${\it 0.40 \pm 0.01}$  & ${\it
          0.33 \pm 0.01}$ \\ 
        & GI & $\bm{0.54 \pm 0.01}$  & ${\it 0.40 \pm 0.01}$  & $\bm{0.34 \pm 0.01}$ \\ \hline 
        \multirow{3}{*}{graphLasso} & GS & $0.37 \pm 0.01$  & $0.36 \pm 0.02$  & $0.32 \pm 0.01$ \\ 
        & GM & $0.36 \pm 0.01$  & $0.31 \pm 0.01$  & $0.31 \pm 0.01$ \\ 
        & GI & $0.33 \pm 0.01$  & $0.30 \pm 0.01$  & $0.27 \pm 0.01$ \\ \hline 
        \multirow{3}{*}{SConES} & GS & ${\it 0.50 \pm 0.01}$  &
        $\bm{0.43 \pm 0.02}$  & ${\it 0.33 \pm 0.02}$ \\ 
        & GM & $0.49 \pm 0.01$  & ${\it 0.40 \pm 0.02}$  & $0.32 \pm 0.02$ \\ 
        & GI & $0.49 \pm 0.01$  & $0.39 \pm 0.01$  & $\bm{0.34 \pm 0.02}$ \\ \hline
    \end{tabular}
    \caption{F-scores of SConES, compared to state-of-the-art Lasso
      algorithms and a baseline univariate linear regression, in six different
      data simulation scenarios: The true causal SNPs are 
      (a) unconnected;
      (b) adjacent on the genomic sequence;
      (c) near the same gene;
      (d) near either of the same $2$ connected genes;
      (e) near either of the same $3$ connected genes;
      (f) near either of the same $5$ connected genes.
      Best performance in bold and second best in italics.
      ``GS'': Genomic sequence network.
      ``GM'': Gene membership network.  ``GI'': Gene interaction network.}
    \label{tab:fscore-simu}
\end{table}

\begin{table}[ht]
    \centering
    \begin{tabular}[h]{|r|r|r|r|r|r|}
        \hline
        \multirow{2}{*}{Scenario} & \multicolumn{5}{c|}{Fraction of edges removed} \\
        \cline{2-6}
        & $0\%$ & $2\%$ & $5\%$ & $10\%$ & $15\%$ \\ \hline
        (b) & $0.58 \pm 0.03$ & $0.58 \pm 0.03$ & $0.58 \pm 0.03$ &
        $0.57 \pm 0.03$ & $0.55 \pm 0.03$ \\
        (c) & $0.75 \pm 0.03$ & $0.75 \pm 0.03$ & $0.75 \pm 0.03$ &
        $0.75 \pm 0.03$ & $0.62 \pm 0.03$ \\
        (f) & $0.34 \pm 0.02$ & $0.34 \pm 0.02$ & $0.34 \pm 0.02$ &
        $0.33 \pm 0.02$ & $0.29 \pm 0.02$ \\
        \hline
    \end{tabular}
    \caption{Effect on the F-scores of SConES of removing a small
      fraction of the network edges.
      Results reported for SConES+GM in three different scenarios:
      The true causal SNPs are 
      (b) adjacent on the genomic sequence;
      (c) near the same gene;
      (f) near either of the same $5$ connected genes.}
    \label{tab:remove-edges}
\end{table}

\subsection{{\it Arabidopsis} Flowering Time Phenotypes}
We then apply our method to a large collection of 17 \emph{A.
  thaliana} flowering times phenotypes from~\citet{atwell10} (up to
$194$ individuals, $214\,051$ SNPs).  The groups and networks are
again derived from the TAIR protein-protein interaction data~\citep{tpi}.  We
filter out SNPs with a minor allele frequency lower than $10\%$, as is
typical in {\it A. thaliana} GWAS studies.  We use the first principal
components of the genotypic data as covariates to correct for
population structure~\citep{price06}: the number of principal
components is chosen by adding them one by one until the genomic
control is close to $1$ (see Figure~\ref{fig:genomic}).

\begin{figure}[h] 
   \centerline{\includegraphics[width=\linewidth]{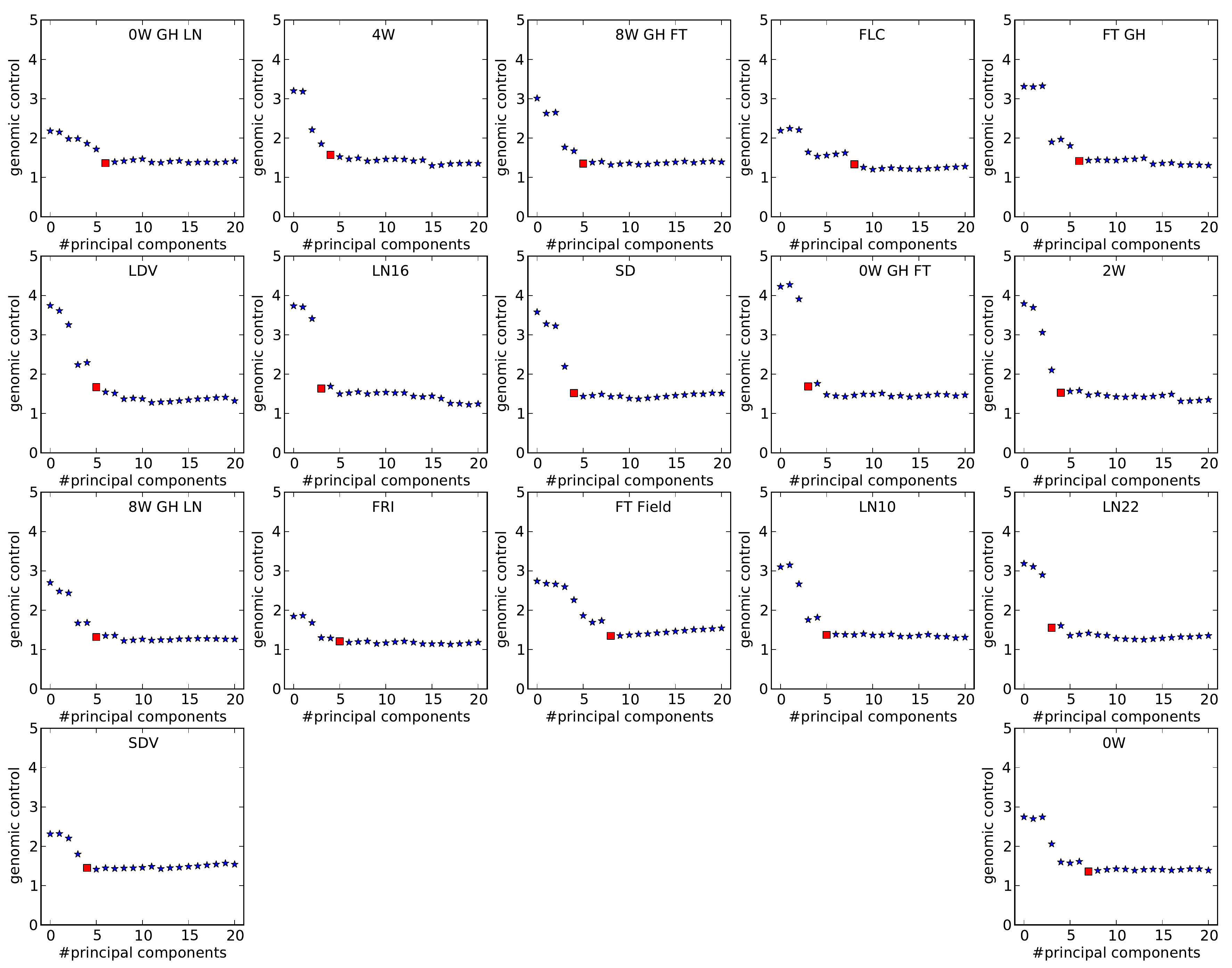}}
      \caption{Genomic control values ($\lambda$) per phenotype for an increasing number of principal components (PC). The red rectangle indicates the number of PCs selected for this particular phenotype.}
   \label{fig:genomic}
\end{figure}

The direct competitors of \scones\ on this problem are the methods
that also impose graph constraints on the SNPs they select, namely
graphLasso and ncLasso.  However, graphLasso does not scale to
datasets such as ours with more than 200k SNPs (see
Figure~\ref{fig:runtime}). Hence we had to exclude it from our
experiments. Note that while even our accelerated implementation of
ncLasso could not be run on more than $125\,000$ SNPs in our
simulations, the networks derived for {\it A. thaliana} are sparser
than that used in the simulations, which makes it possible to run
ncLasso on this data.

Instead, we compare \scones\ to ncLasso and groupLasso, which uses
pairs of neighboring SNPs, SNPs from the same gene or SNPs from
interacting genes as pre-defined groups. Note that groupLasso on
sequence-neighboring SNPs is identical to graphLasso on the sequence
network, which is the only instance of graphLasso whose computation is
practically feasible on this dataset.  We run Lasso, ncLasso,
groupLasso and \scones\ on the flowering time phenotypes as described
in Section~\ref{sec:exp-settings}. However, for many of the
phenotypes, the Lasso approaches select large number of SNPs (more
than $10\,000$), which makes the results hard to interprete. Using
cross-validated predictivity, as is generally done for Lasso, still
does not entirely solve this issue, particularly for large group sizes
(see Tables~\ref{tab:ft-res-nosizefilter}
and~\ref{tab:ft-res-predictivity}). We therefore filter out solutions
containing more than $1\%$ of the total number of SNPs before using
consistency to select the optimal parameters.

To evaluate the quality of the SNPs selected, we perform ridge
regression on each phenotype in a cross-validation scheme that uses
only the selected SNPs and report its average Pearson's squared
correlation coefficient in Figure~\ref{fig:ft-pearson}.  We also
report, as an additional baseline, the cross-validated predictivity of
a standard best linear unbiased prediction (BLUP)
~\citep{henderson75}.  While the features selected by groupLasso+GS
achieve higher predictivity than \scones+GS on most phenotypes, the
features selected by \scones+GM are at least as predictive as those
selected by groupLasso+GM in two thirds of the phenotypes; the picture
is the same for \scones+GI, whose selected SNPs are on average more
predictive than those of groupLasso+GI. The superiority of groupLasso
in that respect is to be expected, as predicitivity is directly
optimized by the regression.  Also note that in $80\%$ of the cases,
if any of the feature selection methods achieves high predictivity
($R^2>0.6$), \scones\ outperforms all other methods including BLUP.

Next, we checked whether the selected SNPs from the three methods
coincide with flowering time genes from the literature.  We report in
Table~\ref{tab:ft-res} the number of SNPs selected by each of the
methods and the proportion of these SNPs that are near flowering time
candidate genes listed by~\citet{segura12}.  Here, the picture is
reversed: \scones+GS and groupLasso+GI retrieve the highest ratio of
SNPs near candidate genes, while groupLasso+GS, \scones+GI and
\scones+GM show lower ratios. At first sight, it seems surprising that
the methods with highest predictive power retrieve the least SNPs near
candidate genes.

To further investigate this phenomenon, we record how many distinct
flowering time candidate genes are retrieved on average by the various
methods.  A gene is considered retrieved if the method selects a SNP
near it.  Our results are shown in Table~\ref{tab:ft-summary}. Methods
retrieving a large fraction of SNPs near candidate genes do not
necessarily retrieve the largest number of distinct candidate
genes. Good predictive power, as shown in Figure~\ref{fig:ft-pearson},
however, seems to correlate with the number of distinct candidate
genes selected by an algorithm, not with the percentage of selected
SNPs near candidate genes.  groupLasso+GI has the highest fraction of
candidate gene SNPs among all methods, but detects only three distinct
candidate genes. This is probably due to groupLasso selecting entire
genes or gene pairs; if groupLasso detects a candidate gene, it will
pick most of the SNPs near that gene, which leads to its high
candidate SNP ratio in Table~\ref{tab:ft-res}.

We also compare the selected SNPs to those deemed significant by a
linear mixed model ran on the full data (see
Table~\ref{tab:ft-emma-hits}). \scones\ systematically recovers more
of those SNPs than the Lasso approaches.

To summarize, \scones\ is able to select SNPs that are highly
predictive of the phenotype. Among all methods, \scones+GM discovers
the largest number of distinct genes whose involvement in flowering
time is supported by the literature.

\begin{table*}[t]
    \centering
    \begin{tabular}{|r|r|r|r|r|r|r|r|r|r|r|r|r|} \hline
        \multirow{2}{*}{Phenotype} & \multirow{2}{*}{Univariate} &
        \multirow{2}{*}{LMM} & 
        \multirow{2}{*}{Lasso} & \multicolumn{3}{c|}{groupLasso} &
        \multicolumn{3}{c|}{ncLasso} &
        \multicolumn{3}{c|}{SConES} \\
        \cline{5-13} & & & & GS & GM & GI & GS & GM & GI & GS & GM &
        GI \\ \hline
        0W  & $0/3$ & $0/0$ & $1/29$ & $33/288$ & $59/706$ & $144/547$ & $40/1077$ & $14/318$ & $14/318$ & $\bm{123/271}$ & $0/85$ & $0/69$\\
        0W GH LN  & $0/0$ & $0/0$ & $2/20$ & $13/205$ & $54/478$ & $\bm{128/321}$ & $31/981$ & $11/320$ & $11/320$ & $92/1251$ & $92/1252$ & $92/1253$\\
        4W  & $1/8$ & $1/2$ & $15/129$ & $7/52$ & $48/1489$ & $\bm{80/436}$ & $2/238$ & $6/298$ & $6/298$ & $104/1670$ & $66/1078$ & $42/859$\\
        8W GH FT  & $0/5$ & $0/1$ & $10/143$ & $\bm{5/16}$ & $66/1470$ & $0/0$ & $14/427$ & $11/398$ & $11/398$ & $26/322$ & $26/322$ & $26/319$\\
        FLC  & $0/1$ & $0/1$ & $1/31$ & $2/95$ & $0/101$ & $0/214$ & $4/135$ & $1/35$ & $1/35$ & $\bm{115/1592}$ & $0/2$ & $0/2$\\
        FT GH  & $0/1$ & $2/10$ & $\bm{7/46}$ & $8/106$ & $90/841$ & $177/1417$ & $37/1434$ & $42/1709$ & $42/1709$ & $0/626$ & $0/59$ & $0/59$\\
        LDV  & $0/4$ & $\bm{1/2}$ & $10/80$ & $8/32$ & $0/0$ & $0/0$ & $14/437$ & $7/177$ & $7/177$ & $39/674$ & $86/1381$ & $54/1091$\\
        LN16  & $0/5$ & $0/0$ & $9/222$ & $0/95$ & $138/957$ & $89/1307$ & $22/1094$ & $33/1323$ & $33/1323$ & $\bm{73/73}$ & $0/3$ & $0/4$\\
        SD  & $0/2$ & $0/1$ & $3/36$ & $36/569$ & $51/863$ & $84/721$ & $20/466$ & $10/224$ & $10/224$ & $\bm{7/59}$ & $\bm{7/59}$ & $\bm{7/59}$\\
        0W GH FT  & $0/9$ & $\bm{1/3}$ & $20/194$ & $49/654$ & $52/898$ & $241/1258$ & $63/1597$ & $84/1997$ & $84/1997$ & $0/6$ & $29/317$ & $29/317$\\
        2W  & $0/12$ & $0/6$ & $4/36$ & $7/79$ & $93/610$ & $\bm{126/810}$ & $28/1006$ & $43/1256$ & $43/1256$ & $76/756$ & $78/1185$ & $25/892$\\
        8W GH LN  & $0/2$ & $0/3$ & $8/122$ & $13/168$ & $0/0$ & $0/0$ & $19/493$ & $21/501$ & $21/501$ & $\bm{11/73}$ & $75/776$ & $68/757$\\
        FRI  & $6/11$ & $5/9$ & $6/18$ & $8/64$ & $8/20$ & $\bm{10/10}$ & $2/9$ & $2/4$ & $2/4$ & $101/1266$ & $101/1271$ & $101/1274$\\
        FT Field  & $2/4$ & $0/0$ & $1/79$ & $5/37$ & $51/221$ & $\bm{52/72}$ & $18/709$ & $5/238$ & $5/238$ & $4/8$ & $4/8$ & $4/8$\\
        LN10  & $0/1$ & $0/0$ & $0/12$ & $2/34$ & $\bm{18/121}$ & $0/202$ & $12/644$ & $12/649$ & $12/649$ & $165/1921$ & $0/91$ & $0/91$\\
        LN22  & $2/14$ & $0/0$ & $6/65$ & $0/12$ & $33/894$ & $81/1023$ & $23/501$ & $26/506$ & $26/506$ & $\bm{140/1378}$ & $\bm{140/1378}$ & $140/1378$\\
        SDV & $0/5$ & $0/1$ & $4/208$ & $3/94$ & $1/721$ &
        $105/936$ & $14/379$ & $15/384$ & $15/384$ & $\bm{53/454}$ & $0/8$ & $0/8$\\
        \hline
    \end{tabular}
    \caption{Associations detected close to known candidate genes, for
      all flowering time phenotypes of \emph{Arabidopsis thaliana}. We
      report the number of selected SNPs near candidate genes,
      followed by the total number of selected SNPs. Largest ratio in
      bold. ``GS'': Genomic sequence network.  ``GM'': Gene membership
      network.  ``GI'': Gene interaction network. }
    \label{tab:ft-res}
    \vspace{-10pt}
\end{table*}

\begin{table}[t]
    \centering
    \begin{tabular}{|r|r|r|r|} \hline & \#SNPs & near candidate
        genes & candidate genes hit \\\hline
        Univariate & $5$ & $0.09$ & $0.35$ \\
        LMM & $2$ & $0.12$ & $0.35$ \\
        Lasso & $86$ & $0.09$ & $3.82$ \\
        groupLasso GS & $153$ & $0.10$ & $4.35$ \\
        groupLasso GM & $611$ & $0.09$ & $1.35$ \\
        groupLasso GI & $546$ & $0.20$ & $2.65$ \\
        ncLasso GS & $684$ & $0.04$ & $4.88$ \\
        ncLasso GM & $608$ & $0.06$ & $4.59$ \\
        ncLasso GI & $608$ & $0.06$ & $4.59$ \\
        SConES GS & $729$ & $0.18$ & $11.53$ \\
        SConES GM & $546$ & $0.08$ & $14.82$ \\
        SConES GI & $496$ & $0.07$ & $12.24$ \\
        \hline
    \end{tabular}    
    \caption{Summary statistics, averaged over the \textit{Arabidopsis
        thaliana} flowering time phenotypes: average total number of
      selected SNPs (``\#SNPs''), average proportion of selected SNPs
      near candidate genes (``near candidate genes'') and average
      number of different candidate genes recovered (``candidate genes
      hit'') .  ``GS'': Genomic sequence network.  ``GM'': Gene
      membership network.  ``GI'': Gene interaction network.}
    \label{tab:ft-summary}
    \vspace{-10pt}
\end{table}

\begin{table}
    \centering
    \begin{tabular}[h]{|l|c|c|ccc|ccc|} \hline
        \multirow{2}{*}{Phenotype} & \multirow{2}{*}{LinReg} & \multirow{2}{*}{Lasso} & \multicolumn{3}{c|}{groupLasso} & \multicolumn{3}{c|}{SConES} \\
        & & & GS & GM & GI & GS & GM & GI \\\hline 
        0W  & $3$ & $53545$ & $53567$ & $53572$ & $53816$ & $271$ & $85$ & $69$ \\
        0W GH LN  & $0$ & $53759$ & $53633$ & $53706$ & $53785$ & $1251$ & $1252$ & $1253$ \\
        4W  & $8$ & $60243$ & $60276$ & $60354$ & $60572$ & $1670$ & $1078$ & $859$ \\
        8W GH FT  & $5$ & $59759$ & $60426$ & $59972$ & $59443$ & $322$ & $322$ & $319$ \\
        FLC  & $1$ & $52587$ & $52491$ & $52588$ & $8672$ & $4624$ & $3115$ & $2$ \\
        FT GH  & $1$ & $53824$ & $53802$ & $54024$ & $54210$ & $626$ & $59$ & $59$ \\
        LDV  & $4$ & $53670$ & $53797$ & $53600$ & $7870$ & $674$ & $1381$ & $1091$ \\
        LN16  & $5$ & $55308$ & $55388$ & $55486$ & $2914$ & $2113$ & $3785$ & $4$ \\
        SD  & $2$ & $54284$ & $54418$ & $54481$ & $12664$ & $59$ & $59$ & $59$ \\
        0W GH FT  & $9$ & $57189$ & $57212$ & $57337$ & $13199$ & $6$ & $317$ & $317$ \\
        2W  & $12$ & $55967$ & $56062$ & $56229$ & $56308$ & $756$ & $1185$ & $892$ \\
        8W GH LN  & $2$ & $60549$ & $60520$ & $60587$ & $2153$ & $73$ & $776$ & $757$ \\
        FRI  & $11$ & $50556$ & $50631$ & $50827$ & $5731$ & $1266$ & $1271$ & $1274$ \\
        FT Field  & $4$ & $46683$ & $47122$ & $221$ & $47322$ & $28805$ & $8$ & $8$ \\
        LN10  & $1$ & $54794$ & $54967$ & $54848$ & $2858$ & $1921$ & $91$ & $91$ \\
        LN22  & $14$ & $56537$ & $56585$ & $56601$ & $15486$ & $1378$ & $1378$ & $1378$ \\
        SDV  & $5$ & $59242$ & $59368$ & $59355$ & $59110$ & $454$ & $8$ & $8$ \\
        \hline
    \end{tabular}
    \caption{Number of SNPs selected, for all flowering time phenotypes of \emph{Arabidopsis
        thaliana}, when using \textbf{consistency without a cardinality constraint}  to select parameters. 
      ``GS'': Genomic sequence network.
      ``GM'': Gene membership network.  ``GI'': Gene interaction network.}
    \label{tab:ft-res-nosizefilter}
\end{table}

\begin{table}
    \centering
    \begin{tabular}[h]{|l|c|c|ccc|ccc|} \hline
        \multirow{2}{*}{Phenotype} & \multirow{2}{*}{LinReg} & \multirow{2}{*}{Lasso} & \multicolumn{3}{c|}{groupLasso} & \multicolumn{3}{c|}{SConES} \\
        & & & GS & GM & GI & GS & GM & GI \\\hline 
        0W  & $3$ & $1077$ & $3289$ & $1827$ & $17854$ & $5037$ & $5002$ & $4968$ \\
        0W GH LN  & $0$ & $765$ & $205$ & $1796$ & $19306$ & $1234$ & $1253$ & $1253$ \\
        4W  & $8$ & $994$ & $2902$ & $8571$ & $21019$ & $1670$ & $1910$ & $1890$ \\
        8W GH FT  & $5$ & $1614$ & $2932$ & $1470$ & $20347$ & $322$ & $322$ & $281$ \\
        FLC  & $1$ & $1158$ & $3517$ & $9805$ & $19905$ & $5126$ & $5126$ & $5119$ \\
        FT GH  & $1$ & $197$ & $591$ & $2081$ & $17562$ & $6163$ & $6161$ & $6163$ \\
        LDV  & $4$ & $676$ & $1832$ & $1809$ & $18329$ & $2867$ & $2867$ & $2867$ \\
        LN16  & $5$ & $2319$ & $5696$ & $15027$ & $18252$ & $5824$ & $6326$ & $6243$ \\
        SD  & $2$ & $1920$ & $569$ & $15933$ & $18655$ & $59$ & $59$ & $59$ \\
        0W GH FT  & $9$ & $194$ & $654$ & $2514$ & $20246$ & $312$ & $316$ & $314$  \\
        2W  & $12$ & $135$ & $387$ & $11368$ & $18886$ & $3186$ & $3001$ & $3210$ \\
        8W GH LN  & $2$ & $1654$ & $3031$ & $60602$ & $20109$ & $816$ & $827$ & $826$ \\
        FRI  & $11$ & $1013$ & $3335$ & $9422$ & $19281$ & $1274$ & $1274$ & $1273$ \\
        FT Field  & $4$ & $1029$ & $2297$ & $47005$ & $2242$ & $8$ & $7$ & $7$  \\
        LN10  & $1$ & $607$ & $184$ & $1871$ & $18674$ & $7840$ & $7846$ & $7846$ \\
        LN22  & $14$ & $393$ & $1132$ & $4019$ & $21308$ & $1377$ & $1321$ & $1378$ \\
        SDV  & $5$ & $1860$ & $10073$ & $16599$ & $19651$ & $8757$ & $4362$ & $8812$ \\
        \hline
    \end{tabular}
    \caption{Number of SNPs selected, for all flowering time phenotypes of \emph{Arabidopsis
        thaliana}, when using the \textbf{predictivity} (cross-validated $R^2$ between actual phenotype and 
      that predicted by a ridge-regression trained only on the selected SNPs) to select parameters. 
      ``GS'': Genomic sequence network.
      ``GM'': Gene membership network.  ``GI'': Gene interaction network.}
    \label{tab:ft-res-predictivity}
\end{table}

\begin{table}
     \centering
      \begin{tabular}[h]{|l|c|c|c|c|ccc|ccc|} \hline
          \multirow{2}{*}{Phenotype} &  \multirow{2}{*}{EMMA} &  \multirow{2}{*}{Univariate} &  \multirow{2}{*}{FaST-LMM} &  \multirow{2}{*}{Lasso} & \multicolumn{3}{c|}{groupLasso} & \multicolumn{3}{c|}{SConES} \\ \cline{6-11}
          & & & & & GS & GM & GI & GS & GM & GI \\\hline
          4W	 & $2$ &  $50\%$ & $100\%$ & $0\%$ & $0\%$ & $0\%$ & $0\%$ & $50\%$ & $50\%$ & $50\%$  \\
          8W GH FT	 & $1$ &  $0\%$ & $100\%$ & $0\%$ & $0\%$ & $0\%$ & $0\%$ & $0\%$ & $0\%$ & $0\%$  \\
          FLC	 & $1$ &  $0\%$ & $100\%$ & $0\%$ & $0\%$ & $0\%$ & $0\%$ & $0\%$ & $0\%$ & $0\%$  \\
          FT GH	 & $10$ &  $0\%$ & $100\%$ & $0\%$ & $0\%$ & $0\%$ & $0\%$ & $0\%$ & $0\%$ & $0\%$  \\
          LDV	 & $2$ &  $0\%$ & $100\%$ & $0\%$ & $0\%$ & $0\%$ & $0\%$ & $0\%$ & $0\%$ & $0\%$  \\
          SD	 & $1$ &  $100\%$ & $100\%$ & $0\%$ & $0\%$ & $0\%$ & $0\%$ & $100\%$ & $100\%$ & $100\%$  \\
          0W GH FT	 & $3$ &  $67\%$ & $100\%$ & $0\%$ & $0\%$ & $0\%$ & $0\%$ & $0\%$ & $0\%$ & $0\%$  \\
          2W	 & $6$ &  $33\%$ & $100\%$ & $0\%$ & $0\%$ & $0\%$ & $0\%$ & $17\%$ & $33\%$ & $17\%$  \\
          8W GH LN	 & $3$ &  $33\%$ & $100\%$ & $0\%$ & $0\%$ & $0\%$ & $0\%$ & $0\%$ & $33\%$ & $33\%$  \\
          FRI	 & $9$ &  $100\%$ & $100\%$ & $100\%$ & $89\%$ & $56\%$ & $0\%$ & $100\%$ & $100\%$ & $100\%$  \\
          SDV	 & $1$ &  $100\%$ & $100\%$ & $0\%$ & $0\%$ & $0\%$ & $0\%$ & $0\%$ & $0\%$ & $0\%$  \\ \hline
      \end{tabular}
      \caption{Fraction of SNPs deemed significantly associated with the
          phenotype by EMMA run the full dataset (number of such SNPs
          reported in the second column) that
          were selected. We only report the phenotypes for which EMMA returned
          at least one significant SNP.}
      \label{tab:ft-emma-hits}    
\end{table}

\begin{figure*}[t]
    \vspace{-20pt} \centerline{\subfloat[Genomic sequence networks] {
        \includegraphics[width=0.6\textwidth]{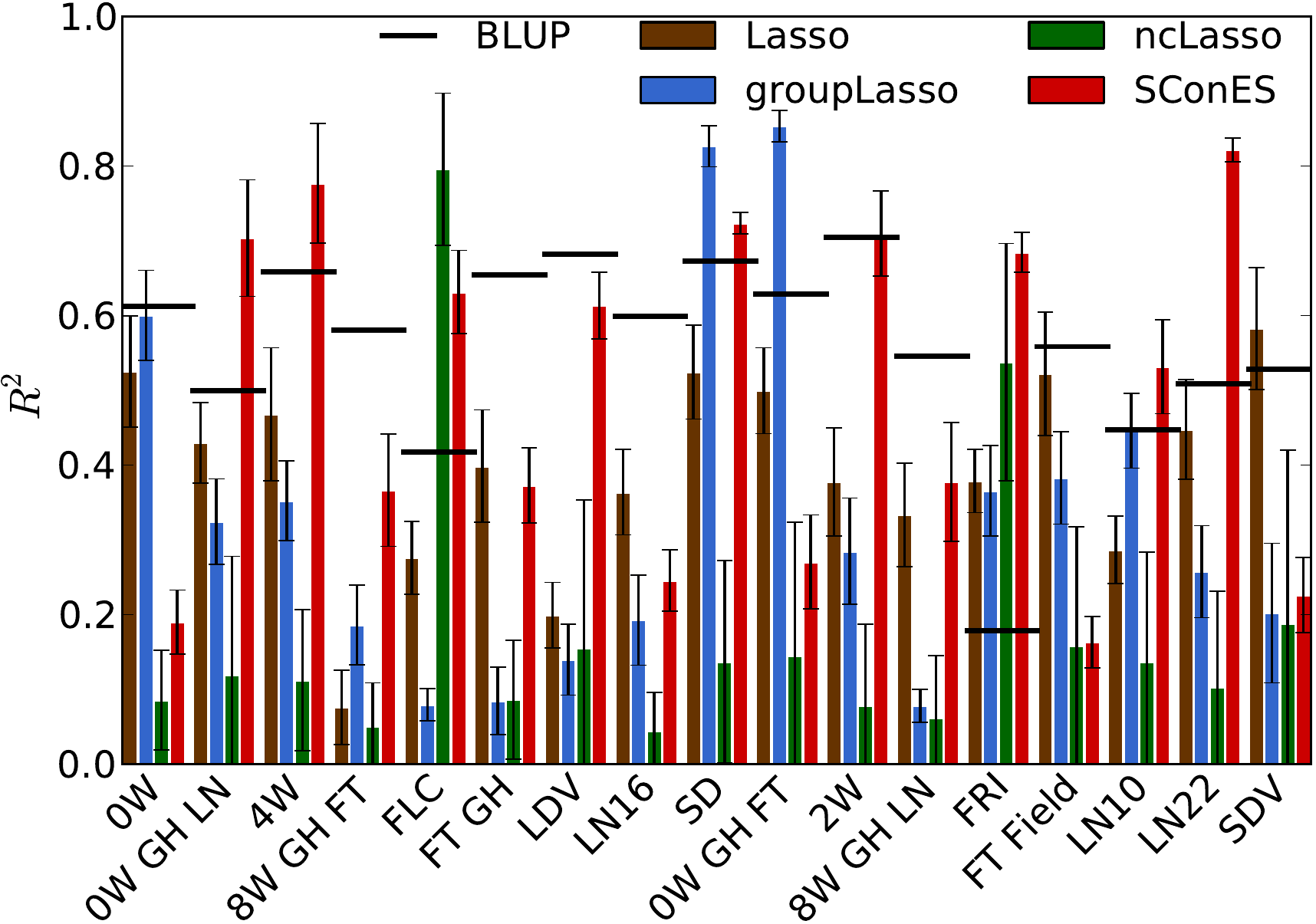}
        \label{fig:ft-pearson-seq}
      }}
    \centerline{\subfloat[Gene membership networks] {
        \includegraphics[width=0.6\textwidth]{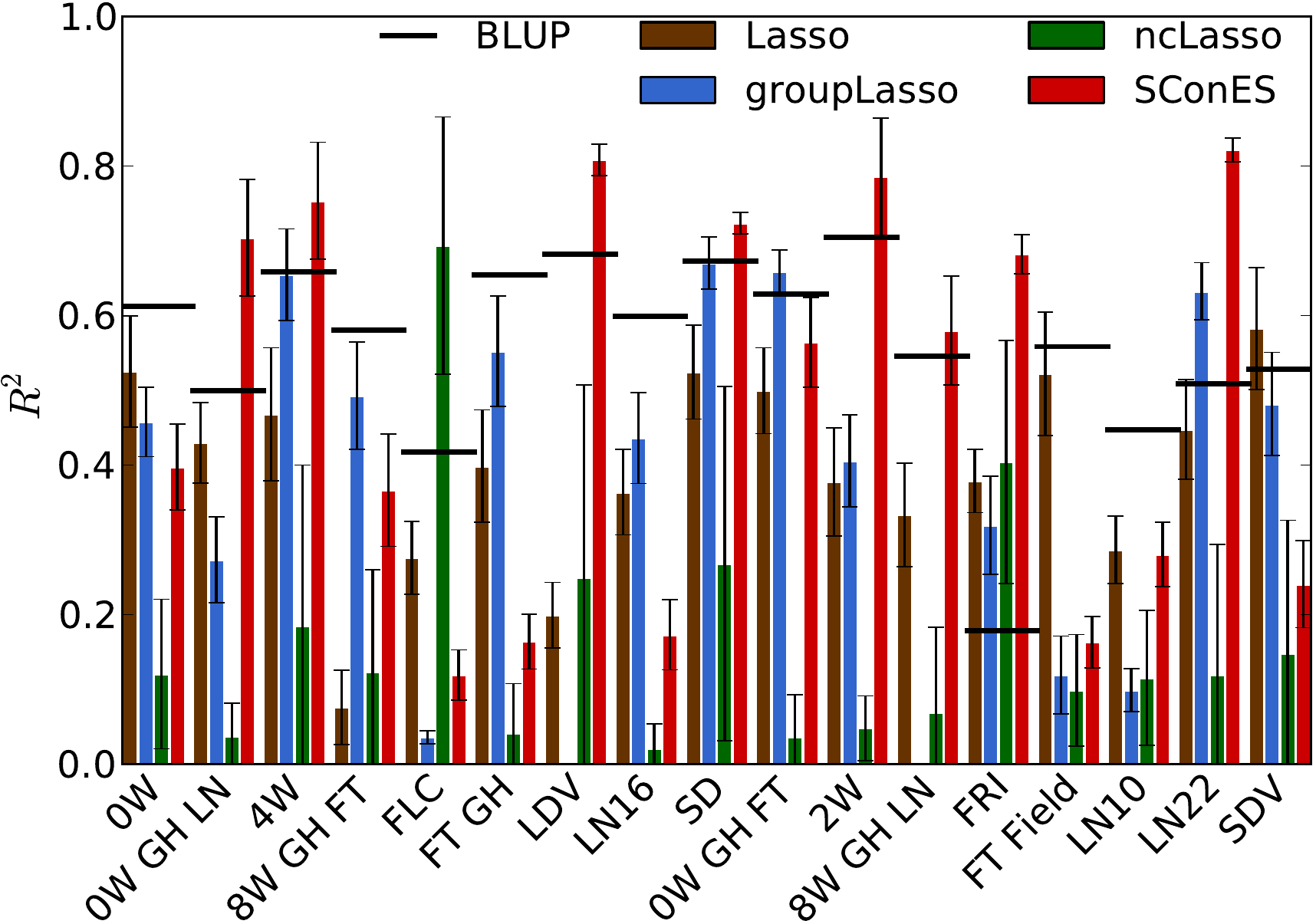}
        \label{fig:ft-pearson-gm}
      }}
    \centerline{\subfloat[Gene interaction networks] {
        \includegraphics[width=0.6\textwidth]{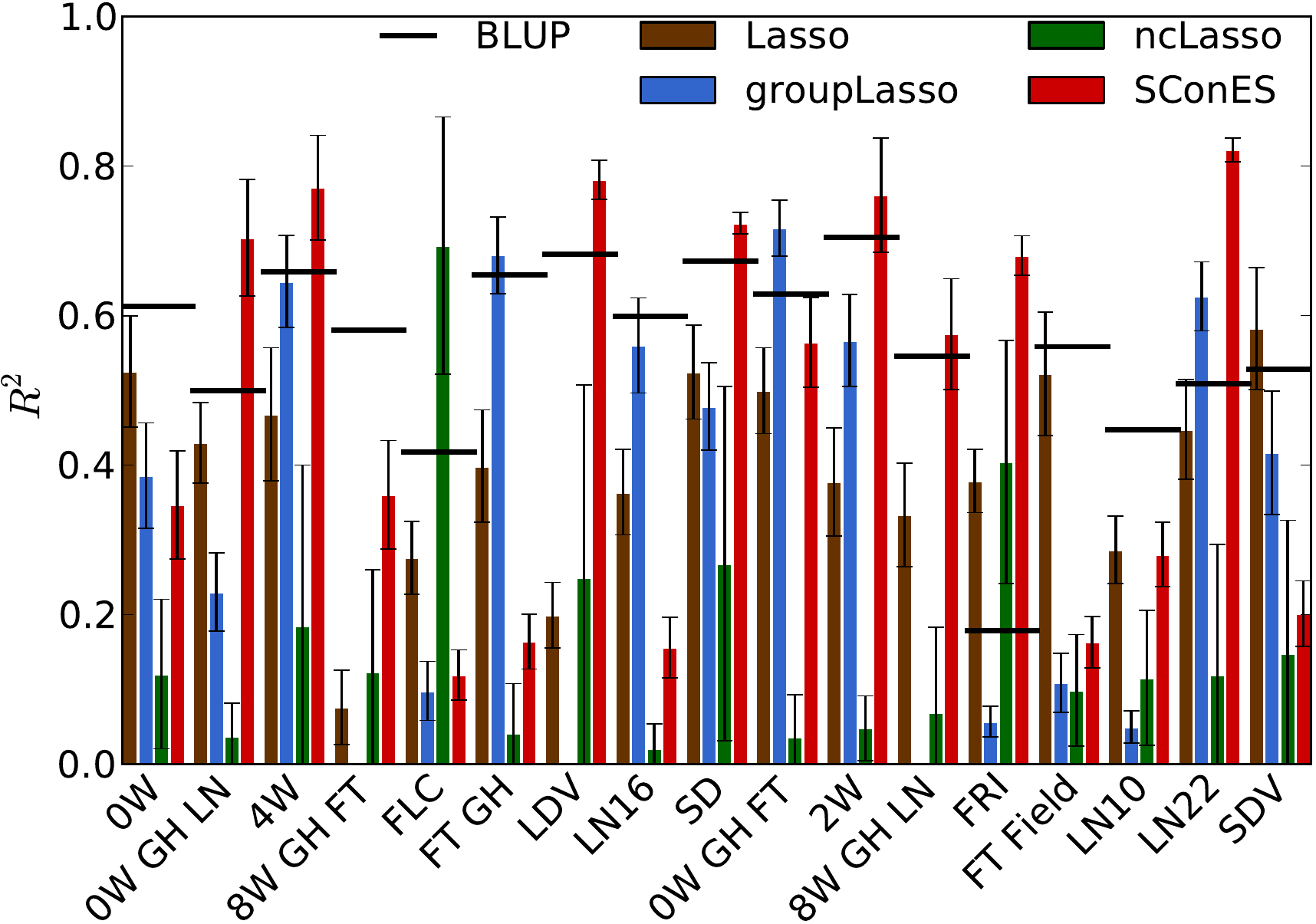}
        \label{fig:ft-pearson-gi}
      }}
    \caption{Cross-validated predictivity (measured as Pearson's
      squared correlation coefficient between actual phenotype and
      phenotype predicted by a ridge-regression over the selected
      SNPs) of \scones\ compared to that of Lasso, groupLasso, and
      ncLasso. Horizontal bars indicate cross-validated BLUP
      predictivity.}
    \label{fig:ft-pearson}
\end{figure*}

\section{Discussion and Conclusions}
\label{sec:conclusions}
In this article, we defined \scones, a novel approach to multi-locus
mapping that selects SNPs that tend to be connected in a given
biological network without restricting the search to predefined sets
of loci.  As the optimization of \scones\ can be solved by maximum
flow, our solution is computationally efficient and scales to whole
genome data. Our experiments show that our method is one to two orders
of magnitude faster than the state-of-the-art Lasso-based comparison
partners, and can therefore easily scale to hundreds of thousands of
SNPs. In simulations, \scones\ is better at leveraging the structure
of the biological network to recover causal SNPs.

On real GWAS data from \textit{Arabidopsis thaliana}, the predictive
ability of the features selected by \scones\ is superior to that of
groupLasso on two of the three network types we consider.  When using
more biological information (gene membership or interactions),
\scones\ tends to recover more distinct explanatory genes than
groupLasso, resulting in better phenotypic prediction.

The constraints imposed by groupLasso and \scones\ are different:
while the groups given to groupLasso and the networks passed to
\scones\ come from the same information, the groups force many more
SNPs to be selected simultaneously when they may not bring much more
information.  This gives \scones\ more flexibility, and makes it less
vulnerable to ill-defined groups or networks, which is especially
desirable in the light of the current noisiness and incompletedness of
biological networks.  Our results on the genomic sequence network
actually indicate that graphLasso, using pairs of network edges as
groups, may achieve the same flexibility as \scones; unfortunately it
is too computationally demanding to be run on the most informative
networks.

We currently derive the SNP networks from neighborhood along the
genome sequence, closeness to a same gene, or proximity to interacting
proteins. Refining those networks and exploring other types of
networks as well as understanding the effects of their topology and
density is one of our next projects.

Let us note that while we do not explicitly consider linkage
disequilibrium, the $l_0$ sparsity constraint of \scones\ should
enforce that when several correlated SNPs are associated with a
phenotype, a single one of them is picked. On the other hand, if
\scones\ is given a genomic sequence network such as the one we
describe, the graph smoothness constraint will encourage nearby SNPs
to be selected together, leading to the selection of sub sequences
that are likely to be haplotype blocks. Such a network should
therefore only be used when the goal of the experiment is to detect
consecutive sequences of associated SNPs.

For now \scones\ considers an additive model between genetic
loci. Future work includes taking pairwise multiplicative effects into
account. Replacing the association term in Equation~\eqref{eq:pb1} by
a sum over pairs of SNPs rather than over individual SNPs results in a
maximum flow problem over a fully connected network of SNPs, which
cannot be solved straightforwardly, if only because the resulting
adjacency matrix is too large to fit in memory on a regular
computer. It might be possible, however, to leverage some of the
techniques used for two-locus GWAS~\citep{achlioptas11,kamthong12} to
help solve this problem.

Extensions of \scones\ to other models include the use of mixed models
to account for population structure and other confounders. This is
currently a challenge as it is unclear how to derive additive test
statistics from such models.

An interesting extension to study would replace the Laplacian by a
random-walk based matrix, derived from powers of the adjacency matrix,
so as to treat differently disconnected SNPs that are closeby in the
networks from those that are far apart. Although we already observe
that SConES is robust to edge removal, this would likely make it more
resistant to missing edges.

Another important extension of \scones\ is to devise a way to evaluate
the statistical significance of the set of selected SNPs.  Regularized
feature selection approaches such as SConES or its Lasso comparison
partners do not lend themselves well to the computation of $p$-values.
Permutation tests could be an option, but the number of permutations
to run is difficult to evaluate, as is that of hypotheses tested.
Another possibility would be to implement the multiple-sample
splitting approach proposed by~\citet{meinshausen09}. However, the
loss of power from performing selection on only subsets of the samples
is too large, given the sizes of current genomic data sets, to make
this feasible. Therefore evaluating statistical significance and
controlling false discovery rates of Lasso and SConES approaches alike
remain a challenge for the future.

Finally, further exciting research topics include applying \scones\ to
larger data sets from human disease consortia (we estimate it would
require less than a day to run on a million of SNPs), and extending it
to the detection of shared networks of markers between multiple
phenotypes.

\section*{Acknowledgments.}
The authors would like to thank Recep Colak, Barbara Rakitsch and Nino
Shervashidze for fruitful discussions.
\paragraph{Funding: } C.A. is funded by an Alexander von
Humboldt fellowship.

\bibliographystyle{natbib} \bibliography{literature}

\end{document}